\newtheorem{theorem}{Theorem}
\newtheorem{lemma}{Lemma}
\newtheorem{definition}{Definition}
\newtheorem{remark}{Remark}
\newtheorem{assumption}{Assumption}
\newcommand{\vertiii}[1]{{\vert\kern-0.25ex\vert\kern-0.25ex\vert #1 
    \vert\kern-0.25ex\vert\kern-0.25ex\vert}}
\pgfplotsset{compat=newest}
\pgfplotsset{plot coordinates/math parser=false,trim axis left}
\newlength\figureheight
\newlength\figurewidth
\title{Learning with Correntropy-induced Losses for Regression with Mixture of Symmetric Stable Noise}
\author[]{Yunlong Feng}
\author[]{Yiming Ying}
\affil[]{Department of Mathematics and Statistics,	State University of New York at Albany, New York, USA}
\date{}
\begin{document}
\maketitle

\begin{abstract}
\hspace{-0.5cm}In recent years, correntropy and its applications in machine learning have been drawing continuous attention owing to its merits in dealing with non-Gaussian noise and outliers. However, theoretical understanding of correntropy, especially in the learning theory context, is still limited. In this study, we investigate correntropy based regression in the presence of non-Gaussian noise or outliers within the statistical learning framework. Motivated by the practical way of generating non-Gaussian noise or outliers, we introduce mixture of symmetric stable noise, which include Gaussian noise, Cauchy noise, and their mixture as special cases, to model non-Gaussian noise or outliers. We demonstrate that under the mixture of symmetric stable noise assumption, correntropy based regression can learn the conditional mean function or the conditional median function well without resorting to the finite-variance or even the finite first-order moment condition on the noise. In particular, for the above two cases, we establish asymptotic optimal learning rates for correntropy based regression estimators that are asymptotically of type $\mathcal{O}(n^{-1})$. These results justify the effectiveness of the correntropy based regression estimators in dealing with outliers as well as non-Gaussian noise. We believe that the present study makes a step forward towards understanding correntropy based regression from a statistical learning viewpoint, and may also shed some light on robust statistical learning for regression.    
\end{abstract}

\section{Introduction and Motivation}\label{sec:intro}
Within the information-theoretic learning framework developed in \cite{principe2010information}, correntropy was proposed in \cite{santamaria2006generalized,liu2007correntropy} and serves as a similarity measure between two random variables. Given two scalar random variables U, V, the correntropy $\mathcal{V}_\sigma$ between $U$ and $V$ is defined as $\mathcal {V}_\sigma(U,V)=\mathbb{E} \mathcal{K}_\sigma(U,V)$ with $\mathcal{K}_\sigma$ a Gaussian kernel given by $\mathcal{K}_\sigma(u,v)=\exp\left\{-(u-v)^2/\sigma^2\right\}$, the scale parameter $\sigma>0$, and $(u,v)$ a realization of $(U,V)$. It is noticed in \cite{liu2007correntropy} that the correntropy $\mathcal {V}_\sigma(U,V)$ can induce a new metric between $U$ and $V$. It is argued in \cite{liu2007correntropy,principe2010information} that this new metric could be a better option in measuring the distance between $U$ and $V$ than the Euclidean metric when the random variable defined by the residual $U-V$ admits a non-Gaussian distribution which is frequently encountered in applications. During the past several years, the merits of correntropy have been verifying by numerous real-world applications across various fields, e.g., signal processing \cite{liu2007correntropy,chen2012maximum,chen2016generalized,chen2017maximum,zou2018robust}, image processing \cite{he2011robust,he2011maximum,he20122,hasanbelliu2014information,wang2014robust,wang2015robust,zhu2017correntropy,wang2017correntropy}, time series forecasting \cite{bessa2009entropy,bessa2010information,mendes2011development}, and many other machine learning tasks such as regression, classification, and clustering \cite{wang2013non,singh2014c,xu2016robust}. Noticing that most of the above mentioned problems can be interpreted from a regression viewpoint, recently some understanding towards correntropy based regression in statistical learning has been conducted in \cite{fenglearning} and \cite{feng2017statistical}, to which the present study is closely related. We, therefore, first revisit the conclusions on correntropy based regression drawn in \cite{fenglearning} and \cite{feng2017statistical}.

\subsection{Formulating Correntropy based Regression}
We start with the following frequently assumed data-generating model in nonparametric regression
\begin{align}\label{data_generating}
Y=f^\star(X)+\varepsilon,
\end{align}
where $X$ is the independent variable that takes values in a compact metric space $\mathcal{X}\subset\mathbb{R}^d$, $Y$ the dependent variable that takes value in $\mathcal{Y}=\mathbb{R}$, and $\varepsilon$ the noise variable. We assume that $\mathbb{E}(\varepsilon|X)=0$ if it exists, otherwise, we assume that ${\sf{median}}(\varepsilon|X)=0$. In regression problems, it is typical that we can only access a set of i.i.d observations $\mathbf{z}=\{(x_i,y_i)\}_{i=1}^n$ generated by \eqref{data_generating}. Our purpose in regression is to infer the unknown truth $f^\star$ while only referring to these observations.

The idea of correntropy based regression is to select the hypothesis from a hypothesis space that maximizes the empirical correntropy estimator between $\{y_i\}_{i=1}^n$ and $\{f(x_i)\}_{i=1}^n$ for any $f:\mathcal{X}\rightarrow\mathbb{R}$, which we term as the Maximum Correntropy Criterion based Regression (MCCR) \cite{fenglearning}. Recall that the following correntropy induced loss $\ell_\sigma: \mathbb{R} \rightarrow [0,+\infty)$ is defined in \cite{fenglearning}:
\begin{align}\label{loss_function}
\ell_\sigma(t)=\sigma^2\left(1-e^{-\frac{t^2}{\sigma^2}}\right),\,\,t\in\mathbb{R},
\end{align}
where $\sigma>0$ is a tuning parameter. MCCR can be formulated into the following empirical risk minimization scheme   
\begin{align}\label{LearningAlgorithmwithCIL}
f_\mathbf{z}:=\arg\min_{f\in\mathcal{H}}\frac{1}{n}\sum_{i=1}^n
\ell_\sigma(y_i-f(x_i)),
\end{align}
where $\mathcal{H}$ is a hypothesis space that is assumed to be a compact subset of $C(\mathcal{X})$.

\subsection{MCCR in Statistical Learning} 
As mentioned above, in the literature, correntropy and its applications in various fields have been investigated. However, in the statistical learning context, theoretical understanding of correntropy based regression estimators is still limited. Unlike commonly employed error metric in regression problems, the error metric induced by correntropy is non-convex and involves a scale parameter $\sigma$, which complicate the analysis. Recently, \cite{fenglearning} investigated correntropy based regression when the scale parameter $\sigma:=\sigma(n)$ goes large in correspondence to the sample size $n$, which was inspired by the studies in  \cite{hu2013learning,fan2014consistency} on empirical minimum error entropy minimization algorithms. When the scale parameter $\sigma(n)$ tends to zero, \cite{feng2017statistical} made some efforts in order to understand correntropy in regression problems and assess the performance of the correntropy based regression estimators from a statistical learning viewpoint. The main concerns in \cite{fenglearning} and \cite{feng2017statistical} are the learning performance of $f_\mathbf{z}$ when the sample size $n$ goes to infinity, where different scenarios of the noise variable $\varepsilon$ and the choices of the  $\sigma$ values were considered. Briefly, the following conclusions were drawn in the above-mentioned two studies:
\begin{itemize}
	\item By relating the scale parameter $\sigma$ to the sample size $n$ (i.e., $\sigma:=\sigma(n)$) and assuming that the noise variable $\varepsilon$ is zero-mean, with a diverging and properly chosen $\sigma$ value, $f_\mathbf{z}$ can approximate the conditional mean function $f^\star$ robustly. Convergence rates were established in the absence of light-tailed assumptions, which justifies the robustness of $f_\mathbf{z}$. Moreover, the scale parameter $\sigma$, in this case, plays a trade-off role between robustness and the approximation ability of the estimator $f_\mathbf{z}$.
	\item By relating the scale parameter $\sigma$ to the sample size $n$ and assuming a unique zero global mode of the noise $\varepsilon$, with a tending-to-zero and properly chosen $\sigma$ value, $f_\mathbf{z}$ approaches the conditional mode function $f^\star$. Note that the unique zero global mode assumption on $\varepsilon$ allows asymmetric or heavy-tailed noise, which again explains the robustness of the MCCR estimator $f_\mathbf{z}$ in this case. 
	\item With a properly chosen scale parameter $\sigma$, the correntropy based regression estimator $f_\mathbf{z}$ is shown to be equivalent to least squares regression estimator in the presence of symmetric and bounded noise. In this case, the equivalence is claimed in the following two senses: first, similar as that of the least squares regression estimator under the same noise condition, the population version of $f_\mathbf{z}$ is exactly the conditional mean function $f^\star$. Second, the convergence rates of $f_\mathbf{z}$ to the conditional mean function are comparable to that of least squares regression estimators.  
\end{itemize}
Some merits of MCCR can be observed from the above statements. For example, MCCR can learn $f^\star$ well in the absence of light-tailed noise assumptions where least squares regression estimators are not capable. On the other hand,  it also performs comparable with least squares regression estimators in the presence of bounded and symmetric noise where the latter one achieves its optimal performance. We refer to Section $6$ in \cite{feng2017statistical} for a general picture of existing understanding on correntropy based regression in statistical learning.

\subsection{Motivation and Contribution} 
The prominent advantages of MCCR estimator lie in its resistance ability to heavy-tailed noise and outliers. As stated above, the conducted theoretical assessments on MCCR estimators in \cite{fenglearning} and \cite{feng2017statistical} justify its superior performance in dealing with heavy-tailed noise. However, several fundamental problems related to MCCR estimators in statistical learning still remain unclear. For instance: 
\medskip

\noindent\textbf{Problem I: Learning performance of MCCR in the presence of Gaussian noise}. When Gaussian noise is present, least squares regression estimators are known to achieve their optimal performance and optimal learning rates of type $\mathcal{O}(n^{-1})$ have been established in the statistical learning literature, see e.g.,  \cite{wang2011optimal} and \cite{guo2013concentration}. Under the same noise assumption, asymptotic learning rates of type $\mathcal{O}(n^{-2/3})$ can be deduced by following the work in \cite{fenglearning}, which are not comparable with that of least squares regression estimators. Notice that the correntropy induced loss $\ell_\sigma$ is Lipschitz continuous and bounded on $\mathbb{R}$, and the fact that $\ell_\sigma$ approximates the least squares loss when $\sigma$ is large enough. It is natural to conjecture that optimal learning rates of MCCR estimators may be also achievable as least squares regression estimators in the presence of Gaussian noise.       
\medskip

\noindent\textbf{Problem II: Learning performance of MCCR with heavy-tailed noise}. In the presence of heavy-tailed noise with finite variance, from \cite{fenglearning} we know that asymptotic learning rates of type $\mathcal{O}(n^{-2/3})$ for MCCR can be established under moment assumptions. If the heavy-tailed noise has infinite variance or even infinite first-order moment condition (such as Cauchy noise), asymptotic learning rates of type $\mathcal{O}(n^{-2/5})$ were established in \cite{feng2017statistical} under mild assumptions. However, both of the above two types of learning rates are far from the type $\mathcal{O}(n^{-1})$, which are regarded as optimal in statistical learning.    
\medskip

\noindent\textbf{Problem III: Understanding MCCR in the presence of outliers}. When outliers are presented, how MCCR estimators learn the unknown truth function $f^\star$ still remains unclear, although empirically their superior performance in dealing with outliers has been observed. As mentioned above, this is, in fact, one of the most prominent advantages of MCCR estimators over other regression estimators. The main barrier to understanding MCCR in the presence of outliers lies in the modeling of outliers in analysis. This is because for the time being there exists no distribution independent definition of outlier and more frequently, outliers are defined in association with concrete distributions, see e.g., \cite{hawkins1980identification,rousseeuw2005robust,aggarwal2015outlier}. 

The present study aims to address the above three concerns on correntropy based regression, especially the concern of understanding MCCR in the presence of outliers. We start with the following motivating observation: a very frequently employed technique of generating outliers in robust statistics \cite{tukey1960survey,huber1964robust,jurevckova2005robust,huber2009robust,hampel2011robust}, machine learning \cite{scholkopf2001learning,han2011data}, as well as many engineering applications \cite{kassam1985robust,kozick2000maximum} is as follows 
\begin{align}\label{noise_mixture}
\varepsilon\thicksim \lambda_1\mathcal{N}(\mu_1,\sigma_1^2)+\lambda_2\mathcal{N}(\mu_2,\sigma_2^2),
\end{align}
where $\lambda_1+\lambda_2=1$, $\lambda_1\gg\lambda_2$, $\sigma_1^2\ll \sigma_2^2$, and $\mathcal{N}(\mu_1,\sigma_1^2)$, $\mathcal{N}(\mu_2,\sigma_2^2)$ are two Gaussian distributions with mean $\mu_1$, $\mu_2$ and variance $\sigma_1^2$, $\sigma_2^2$, respectively. In \eqref{noise_mixture}, $\mathcal{N}(\mu_1,\sigma_1^2)$ is usually considered as background noise while $\mathcal{N}(\mu_2,\sigma_2^2)$ is regarded as the contaminating noise that generates outliers since $\sigma_2^2$ is far larger than $\sigma_1^2$. In some cases, other distributions that have heavier tails than Gaussian (such as Cauchy noise) may be also employed in \eqref{noise_mixture} as contaminating noise. On the other hand, we notice that both Gaussian noise and Cauchy noise belong to the type of symmetric stable noise. These observations remind us to impose the mixture of symmetric stable noise assumption on $\varepsilon$ and study the performance of MCCR in this case. In fact, as we shall see later, mixture of symmetric stable distributions have been frequently employed in many engineering applications to model impulsive noise. Another nice property of mixture of symmetric stable noise lies in that it can approximate the distribution of any noise arbitrarily well. 

With the introduction of mixture of symmetric stable distributions in modeling heavy-tailed noise or outliers, in this paper, we make a step forward in understanding correntropy based regression in statistical learning. More detailed speaking, concerning the study of correntropy based regression estimators, in this work, we make the following contributions:
\begin{itemize}
	\item We introduce the mixture of symmetric stable distributions to model the noise $\varepsilon$. The family of mixture of symmetric stable noise includes the Gaussian noise, the mixture Gaussian noise, the Cauchy noise, and many other kinds of mixture noise, and so is capable of modeling heavy-tailed noise and outliers. We notice that within the statistical learning framework, we make some first attempts in modeling outliers via mixture of symmetric stable distributions.  
	\item Under the mixture of symmetric stable noise assumption, we demonstrate that MCCR estimators can learn the unknown truth function $f^\star$ in an unbiased way in that the population version of $f_\mathbf{z}$ is exactly $f^\star$. Recall that $f^\star$ is the conditional mean function or the conditional median function, and the mixture of symmetric stable noise consists of a large family of noise from light-tailed to heavy-tailed. This indicates that MCCR could be employed to learn $f^\star$ after seeing enough observations without resorting to the sub-Gaussianity of the noise. 
	\item We establish asymptotic learning rates of type $\mathcal{O}(n^{-1})$ which are comparable with those of least squares regression estimators under the sub-Gaussianity noise assumption. As stated above, the mixture of symmetric stable noise include Gaussian noise and Cauchy noise as two special cases, and can be used to model outliers. Therefore, the present study provides direct answers to the three problems stated above. In fact, establishing almost sure convergence rates of type $\mathcal{O}(n^{-1})$ in learning theory without appealing to finite variance assumption of the noise may be of independent interest. 
\end{itemize}

The rest of this paper is organized as follows. In Section \ref{sec::mixture_stable_noise}, we provide the definitions of symmetric stable distributions and mixture of symmetric stable distributions and introduce some of their applications. Section \ref{sec::MCCR_with_SaS} is concerned with the assessments of correntropy based regression in the presence of mixture of symmetric stable noise. The performance of MCCR, in this case, will be studied in this section, and results on learning rates of MCCR estimators will be presented here. We will also give some comments on the obtained learning rates and the MCCR estimator in this section. The paper is concluded in Section \ref{sec::conclusion}.   

\section{Mixture of Symmetric Stable Distributions and Its Applications}\label{sec::mixture_stable_noise}
In this section, we introduce the mixture of symmetric stable distributions and its applications.  To this end, we shall first introduce the symmetric stable distribution.

\begin{definition}[Symmetric Stable Distribution \cite{samorodnitsky1994stable}]\label{definition-symmetric_stable}
	A univariate distribution function is symmetric stable if its characteristic function takes the following form
	\begin{align*}
	\phi(t)=\exp\big\{\mathrm{i}\mu t-\gamma|t|^\alpha\big\},\quad \hbox{for any}\quad t\in\mathbb{R},
	\end{align*}
	where $-\infty<\mu<\infty$, $\gamma>0$, $0<\alpha\leq 2$, and $\mathrm{i}$ is the imaginary unit.
\end{definition}

More precisely, the symmetric stable distribution defined in Definition \ref{definition-symmetric_stable} is said to be $\alpha$-stable and symmetric about the location $\mu$. As shown in Definition \ref{definition-symmetric_stable}, a symmetric stable distribution has three parameters, namely, the location parameter $\mu$, the scale parameter $\gamma$, and the characteristic exponent $\alpha$. The characteristic exponent $\alpha$ is a shape parameter and measures the thickness of the tails of the density function. Two typical examples of symmetric stable distributions are Gaussian distribution ($\alpha=2$) and Cauchy distribution ($\alpha=1$). A symmetric stable distribution with $0<\alpha<2$ only admits absolute moments of order less than $\alpha$. Therefore, all symmetric stable distributions do not have finite variance except for the Gaussian distribution. For more properties of symmetric stable distributions, we refer to \cite{fama1968some,miller1978properties,samorodnitsky1994stable}.

When a univariate distribution $P$ consists of different components with each of which a symmetric stable distribution and can be expressed as a convex combination of these components, it is called a mixture of symmetric stable distributions \cite{mclachlan1988mixture}.  

\begin{definition}[Mixture of Symmetric Stable Distributions]\label{mixture_definition-symmetric_stable}
A univariate distribution $P$ with density $p$ is a mixture of symmetric stable distributions if it is a convex combination of symmetric stable distributions $\{P_i\}_{i=1}^K$ with density function $\{p_i\}_{i=1}^K$ and $K$ a positive integer, i.e., there exists $\lambda_1,\cdots,\lambda_K$ with $\lambda_i> 0$ for $i=1,\ldots,K$, and $\sum_{i=1}^K\lambda_i=1$, such that
	\begin{align*}
	P(t)=\sum_{i=1}^K \lambda_i P_i(t), \quad \hbox{and} \quad p(t)=\sum_{i=1}^K \lambda_i p_i(t), \quad\hbox{for any}\quad t\in\mathbb{R}.
	\end{align*} 
\end{definition}

In Definition \ref{mixture_definition-symmetric_stable}, $\lambda_1,\ldots,\lambda_K$ are called the mixing weights and $p_1,\ldots,p_K$ are component densities. It is obvious that when $K=1$, a mixture of symmetric stable distributions is reduced to a symmetric stable distribution. In particular, if $p_1,\ldots,p_K$ are normal densities, then $p$ is a mixture of Gaussian. A nice property of the mixture of Gaussian density is that it can approximate any density function to arbitrary accuracy with suitable choice of parameters and enough components $K$ \cite{titterington1985statistical,mclachlan1988mixture}. 

Symmetric stable distributions have been drawing continuous attention in the statistics literature \cite{fama1968some,fama1971parameter,dumouchel1973stable,miller1978properties,chen2004behavior}. The mixture of symmetric stable distributions, which includes the mixture of Gaussian and symmetric stable distributions as special cases, has been extensively applied into many applications. As mentioned above, in robust statistics, it has been employed to mimic perturbed or heavy-tailed distributions, see e.g., \cite{huber2009robust}. In many engineering applications, especially applications in the field of signal processing, image processing, and wireless communications, it has been frequently applied to model impulsive noise \cite{ shao1993signal,ambike1994detection,nikias1995signal,ilow1998analytic,kuruoglu1998near,eldar2001finite,kosko2001robust,brcich2005stability,lombardi2006line,	souryal2008soft,rajan2010diversity,park2011maximin} or outliers \cite{barnett1994outliers,aggarwal2015outlier}.

\section{MCCR with Mixture of Symmetric Stable Noise}\label{sec::MCCR_with_SaS}
The noise is mixture of symmetric stable noise if its distribution is a mixture of symmetric stable distributions. As stated in the above section, it can be employed to model non-Gaussian noise and outliers. In this section, we study MCCR from a statistical learning viewpoint in the presence of mixture of symmetric stable noise $\varepsilon$. We start with the introduction of several notations and assumptions.  

\subsection{Notations and Assumptions}
We denote the unknown probability distribution over $\mathcal{X}\times\mathcal{Y}$ as $\rho$ and $\rho_{X}$ as the marginal distribution of $\rho$ over $\mathcal{X}$. For any $f\in\mathcal{H}$, the empirical error in \eqref{LearningAlgorithmwithCIL} is denoted as $\mathcal{E}_\mathbf{z}^\sigma(f)$, that is,
\begin{align*}
\mathcal{E}_\mathbf{z}^\sigma(f)=\frac{1}{n}\sum_{i=1}^n \ell_\sigma(y_i-f(x_i)),
\end{align*}
and its population version $\mathcal{E}^\sigma(f)$ is defined as
\begin{align*}
\mathcal{E}^\sigma(f)=\int_{\mathcal{X}\times\mathcal{Y}}\ell_\sigma(y-f(x))\mathrm{d}\rho.
\end{align*}   
The distance between $f$ and $f^\star$ in $L_{\rho_X}^2$ is denoted as $\|f-f^\star\|_\rho^2$. Besides, for any two quantities $a,b$, we denote $a\lesssim b$ if there exists a positive constant $c$ such that $a\leq cb$.

\begin{assumption}[Mixture of Symmetric Stable Noise]\label{noise_assumption}
	The distribution of the noise $\varepsilon$ is a mixture of symmetric stable distributions with location parameter $0$, i.e., the density $p_\mathsmaller{\varepsilon,x}$ of the noise variable $\varepsilon$ for any $x\in\mathcal{X}$ takes the following form
	\begin{align*}
	p_\mathsmaller{\varepsilon,x}(t)=\sum_{i=1}^K\lambda_i p_\mathsmaller{\varepsilon,x,i}(t),\quad\hbox{for any}\quad t\in\mathbb{R},
	\end{align*} 
	where $K$ is a positive integer, $\lambda_i>0$ for $i=1,\ldots,K$, $\sum_{i=1}^K\lambda_i=1$, and $p_\mathsmaller{\varepsilon,x,i}$ is the density function of the symmetric stable distribution $P_\mathsmaller{\varepsilon,x,i}$ that is centered around $0$ for $i=1,\ldots,K$.  
\end{assumption}

The second assumption is on the complexity of $\mathcal{H}$ in terms of the $\ell^2$-empirical covering number $\mathcal{N}_2(\mathcal{H},\eta)$, see e.g., \cite{wu2007multi,shi2011concentration,guo2013concentration}, which is defined as follows.

\begin{definition} 
	Let ${\bf
		x}=\{x_1,x_2,\ldots,x_n\} \subset \mathcal{X}^n$. 
	The $\ell^2$-empirical covering number of the hypothesis space $\mathcal{H}$, which is denoted as $\mathcal{N}_2\left(\mathcal{H}, \eta\right)$ with radius $\eta>0$, is defined by
	\begin{align*}
	\mathcal{N}_2\left(\mathcal{H}, \eta\right):=&\sup_{n \in \mathbb{N}}\sup_{\mathbf{x}\in \mathcal{X}^n}\inf\left\{\ell\in\mathbb{N}:\exists \{f_i\}_{i=1}^\ell\subset\mathcal{H}  \,\hbox{such that for each}\,  f\in\mathcal{H},\hbox{there exists some}\right.\\
	&i\in\{1,2,\ldots,\ell\} \,\, \hbox{with}\,\,\,  \frac{1}{n}\sum_{j=1}^n |f(x_j)-f_i(x_j)|^2\leq \eta^2\Big\}.
	\end{align*}
\end{definition}

\begin{assumption}[Complexity Assumption]\label{complexity_assumption} 
	There exist positive constants $0<s<2$ and $c$ such that
	$$\log\mathcal{N}_2(\mathcal{H},\eta)\leq c \eta^{-s},\,\, \forall\,\,
	\eta>0.$$
\end{assumption}

Throughout this paper, we also assume that there exists a positive constant $M$ such that $\sup_{f\in\mathcal{H}}\|f\|_\infty\leq M$, and $\|f^\star\|_\infty\leq M$.

\subsection{Unbiasedness of MCCR with Mixture of Symmetric Stable Noise}
In the presence of mixture of symmetric stable noise, in this part, we will show that MCCR can learn $f^\star$ in an unbiased way. This is stated in the sense of the following theorem, which is established by applying techniques proposed in \cite{fan2014consistency}.  

\begin{theorem}\label{equivalent_theorem}
	Suppose that Assumption \ref{noise_assumption} holds and $f^\star\in\mathcal{H}$. Then we have
	\begin{align*}
	f^\star=\arg\min_{f\in\mathcal{H}}\mathcal{E}^\sigma(f),
	\end{align*}
	and for any $f\in\mathcal{H}$, it holds that
	\begin{align*}
	c_\mathsmaller{\sigma,\gamma,\alpha} \|f-f^\star\|_\rho^2 \leq \mathcal{E}^\sigma(f)-\mathcal{E}^\sigma(f^\star)\leq  \|f-f^\star\|_\rho^2,
	\end{align*}
	where $c_\mathsmaller{\sigma,\gamma,\alpha}$ is a positive constant that will be given explicitly in the proof.  
\end{theorem}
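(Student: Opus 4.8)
The plan is to reduce everything to a pointwise (in $x$) analysis of the conditional risk and then exploit the strict positivity of the characteristic function of a location-zero symmetric stable law. First I would condition on $X=x$: writing $a=f(x)-f^\star(x)$ and using the model $Y=f^\star(X)+\varepsilon$, the conditional excess risk becomes $R_x(a)-R_x(0)$, where $R_x(a):=\mathbb{E}[\ell_\sigma(\varepsilon-a)\mid X=x]=\sigma^2\big(1-G_x(a)\big)$ and $G_x(a):=\int_{\mathbb{R}}e^{-(t-a)^2/\sigma^2}p_{\varepsilon,x}(t)\,\mathrm{d}t$. Integrating over $\rho_X$ gives $\mathcal{E}^\sigma(f)-\mathcal{E}^\sigma(f^\star)=\int_{\mathcal{X}}\big(R_x(a)-R_x(0)\big)\,\mathrm{d}\rho_X(x)$ and $\|f-f^\star\|_\rho^2=\int_{\mathcal{X}}a^2\,\mathrm{d}\rho_X(x)$, so it suffices to establish the two-sided pointwise bound $c_{\sigma,\gamma,\alpha}\,a^2\le R_x(a)-R_x(0)\le a^2$ for $|a|\le 2M$, together with the fact that $a=0$ uniquely minimizes $R_x$.

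The central observation is that $G_x=\psi_\sigma * p_{\varepsilon,x}$ is the convolution of the even Gaussian $\psi_\sigma(s)=e^{-s^2/\sigma^2}$ with the symmetric density $p_{\varepsilon,x}$, so its Fourier transform factors as $\widehat{G_x}(\omega)=\widehat{\psi_\sigma}(\omega)\,\phi_{\varepsilon,x}(\omega)$. Both factors are strictly positive: $\widehat{\psi_\sigma}$ is again a Gaussian, and under Assumption \ref{noise_assumption} the characteristic function of each location-zero component equals $e^{-\gamma|\omega|^\alpha}>0$, so $\phi_{\varepsilon,x}(\omega)=\sum_i\lambda_i e^{-\gamma_i|\omega|^{\alpha_i}}>0$. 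Fourier inversion then yields $G_x(a)=\frac{1}{2\pi}\int_{\mathbb{R}}\widehat{G_x}(\omega)\cos(\omega a)\,\mathrm{d}\omega$, whence $G_x(a)\le G_x(0)$ with strict inequality for $a\ne0$. This immediately gives the first claim of the theorem: since $R_x(a)\ge R_x(0)$ pointwise, integrating over $\rho_X$ shows $f^\star$ minimizes $\mathcal{E}^\sigma$.

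For the upper bound I would argue directly rather than through Fourier analysis. Using the symmetry of $p_{\varepsilon,x}$ one writes $G_x(a)=e^{-a^2/\sigma^2}\int_{\mathbb{R}} e^{-t^2/\sigma^2}\cosh(2ta/\sigma^2)\,p_{\varepsilon,x}(t)\,\mathrm{d}t\ge e^{-a^2/\sigma^2}G_x(0)$, since $\cosh\ge1$; combining this with the elementary bound $1-e^{-u}\le u$ and with $G_x(0)=\int_{\mathbb{R}} e^{-t^2/\sigma^2}p_{\varepsilon,x}(t)\,\mathrm{d}t\le1$ yields $R_x(a)-R_x(0)=\sigma^2\big(G_x(0)-G_x(a)\big)\le G_x(0)\,a^2\le a^2$.

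The main obstacle is the lower bound, because $\ell_\sigma$ is non-convex and the Fourier integral runs over all frequencies, so the global convexity trick is unavailable. Here I would start from $G_x(0)-G_x(a)=\frac{1}{2\pi}\int_{\mathbb{R}}\widehat{G_x}(\omega)\big(1-\cos(\omega a)\big)\,\mathrm{d}\omega$ and retain only the frequency window $|\omega|\le\pi/(2M)$. On this window $|\omega a|\le\pi$ because $|a|\le2M$, so the inequality $1-\cos\theta\ge\frac{2}{\pi^2}\theta^2$ (valid for $|\theta|\le\pi$) applies and extracts a factor $a^2$. It then remains to bound $\int_{|\omega|\le\pi/(2M)}\omega^2\,\widehat{G_x}(\omega)\,\mathrm{d}\omega$ below by a strictly positive constant uniform in $x$; this is exactly where the strict positivity of $\phi_{\varepsilon,x}$ over the compact frequency window is used, and where boundedness of the scale and exponent parameters is needed to make the bound uniform. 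The resulting constant is the explicit $c_{\sigma,\gamma,\alpha}$, and reassembling the pointwise estimates and integrating over $\rho_X$ completes the proof.
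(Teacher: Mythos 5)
Your proposal is correct and, for the substantive part of the theorem, follows essentially the same route as the paper: both pass to the Fourier side (the paper via Plancherel, you via $\widehat{G_x}=\widehat{\psi_\sigma}\,\phi_{\varepsilon,x}$ and inversion), both get the minimizer claim from positivity of $\widehat{G_x}$, and both obtain the lower bound by truncating to the frequency window $|\omega|\le \pi/(2M)$ and applying Jordan's inequality (your $1-\cos\theta\ge \tfrac{2}{\pi^2}\theta^2$ for $|\theta|\le\pi$ is the same estimate as the paper's bound on $\sin^2(\theta/2)$), yielding the same explicit constant $c_{\sigma,\gamma,\alpha}$. The only genuine divergence is the upper bound: the paper Taylor-expands $F_x$ and uses $F_x''\le 2/\sigma^2$, whereas you use the identity $G_x(a)=e^{-a^2/\sigma^2}\int e^{-t^2/\sigma^2}\cosh(2ta/\sigma^2)\,p_{\varepsilon,x}(t)\,\mathrm{d}t\ge e^{-a^2/\sigma^2}G_x(0)$ together with $1-e^{-u}\le u$; both are valid and give the same bound $\|f-f^\star\|_\rho^2$, your version having the small advantage of avoiding any differentiation under the integral sign.
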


\begin{proof}
	From the definitions of the notions, we know that
	\begin{align*}
	\mathcal{E}^\sigma(f)-\mathcal{E}^\sigma(f^\star)=  \sigma^2\int_{\mathcal{X}}[F_x(f(x)-f^\star(x))-F_x(0)]\mathrm{d}\rho_X(x),
	\end{align*}
	where $F_x:\mathbb{R}\rightarrow \mathbb{R}$ is denoted as
	\begin{align*}
	F_x(u):=1-\int_{-\infty}^{+\infty}\exp\left\{-\frac{(t-u)^2}{\sigma^2}
	\right\}p_\mathsmaller{\varepsilon,x}(t){\mathrm{d}}t,\,\,x\in\mathcal{X}.
	\end{align*}
	From the Taylor's theorem, we know that
	\begin{align*}
	F_x(f(x)-f^\star(x))-F_x(0)= F_x^\prime(0)(f(x)-f^\star(x))+\frac{F_x^{\prime\prime}(\zeta_x)}{2}(f(x)-f^\star(x))^2,
	\end{align*} 
	where for any $x\in\mathcal{X}$, $0<\zeta_x<f(x)-f^\star(x)$. Due to the symmetry assumption of the noise, for any $x\in\mathcal{X}$, we have
	\begin{align*}
	F_x^\prime(0)=-2\int_{-\infty}^{+\infty}\exp\left(-\frac{t^2}{\sigma^2}
	\right)\left(\frac{t}{\sigma^2}\right)p_\mathsmaller{\varepsilon,x}(t){\mathrm{d}}t=0,
	\end{align*}
	and
	\begin{align*}
	F_x^{\prime\prime}(\zeta_x)=2\int_{-\infty}^{+\infty}\exp\left\{-\frac{(t-\zeta_x)^2}{\sigma^2}
	\right\}\left(\frac{\sigma^2-2(t-\zeta_x)^2}{\sigma^4}\right)p_\mathsmaller{\varepsilon,x}(t){\mathrm{d}}t,\,\,x\in\mathcal{X}.
	\end{align*}
	It is obvious that for any $x\in\mathcal{X}$, the following inequality 
	\begin{align*}
	F_x^{\prime\prime}(u)\leq \frac{2}{\sigma^2}
	\end{align*} 
	holds uniformly for $0<u<f(x)-f^\star(x)$. Therefore, we have
	\begin{align}\label{upper_bound}
	\begin{split}
	\mathcal{E}^\sigma(f)-\mathcal{E}^\sigma(f^\star)&=  \sigma^2\int_{\mathcal{X}}[F_x(f(x)-f^\star(x))-F_x(0)]\mathrm{d}\rho_X(x)\\
	&=\frac{\sigma^2}{2}\int_\mathcal{X}F_x^{\prime\prime}(\zeta_x)(f(x)-f^\star(x))^2\mathrm{d}\rho_X(x)\\
	&\leq \int_{\mathcal{X}}(f(x)-f^\star(x))^2\mathrm{d}\rho_X(x).
	\end{split}
	\end{align}
	On the other hand, with simple computations, we have
	\begin{align*}
	\mathcal{E}^\sigma(f)-\mathcal{E}^\sigma(f^\star)&=\sigma^2\int_{\mathcal{X}}\int_{-\infty}^{+\infty} \left[\exp\left(-\frac{t^2}{\sigma^2}\right)-\exp\left(-\frac{(t-[f(x)-f^\star(x)])^2}{\sigma^2}\right)\right]p_\mathsmaller{\varepsilon,x}(t){\mathrm{d}}t\mathrm{d}\rho_X(x)\\
	&=\sigma^2\int_{\mathcal{X}}\int_{-\infty}^{+\infty} \left[\exp\left(-\frac{t^2}{\sigma^2}\right)-\exp\left(-\frac{(t-[f(x)-f^\star(x)])^2}{\sigma^2}\right)\right]p_\mathsmaller{\varepsilon,x}(t){\mathrm{d}}t\mathrm{d}\rho_X(x)\\
	&=\sigma^2\int_{\mathcal{X}}\int_{-\infty}^{+\infty} \left[\exp\left(-\frac{t^2}{\sigma^2}\right)-\exp\left(-\frac{(t-u_x)^2}{\sigma^2}\right)\right]p_\mathsmaller{\varepsilon,x}(t){\mathrm{d}}t\mathrm{d}\rho_X(x)\\
	&=\sigma^2\int_{\mathcal{X}}\int_{-\infty}^{+\infty} \left[\exp\left(-\frac{(t+u_x)^2}{\sigma^2}\right)-\exp\left(-\frac{t^2}{\sigma^2}\right)\right]p_\mathsmaller{\varepsilon,x}(t){\mathrm{d}}t\mathrm{d}\rho_X(x),
	\end{align*} 
	where for any $x\in\mathcal{X}$, $u_x=f(x)-f^\star(x)$. From  Assumption \ref{noise_assumption} on the noise and recalling the linearity property of the Fourier transform, we have
	\begin{align*}
	\widehat{p_\mathsmaller{{\epsilon,x}}}(\xi)=\sum_{i=1}^K \lambda_i \widehat{p_\mathsmaller{{\epsilon,x,i}}}(\xi),
	\end{align*}
	where $\widehat{p_\mathsmaller{{\epsilon,x}}}$ is the Fourier transform of $p_\mathsmaller{{\epsilon,x}}$, and $\widehat{p_\mathsmaller{{\epsilon,x,i}}}$ is the Fourier transform of $p_\mathsmaller{{\epsilon,x,i}}$, $i=1,\ldots,K$. Moreover, for $i=1,\ldots,K$, since $P_\mathsmaller{{\epsilon,x,i}}$ is a symmetric stable distribution with the location parameter $0$, we know that there exist  $\gamma_i>0$ and $0<\alpha_i\leq 2$ such that
	\begin{align*}
	\widehat{p_\mathsmaller{{\epsilon,x,i}}}(\xi)=e^{-\gamma_i|\xi|^{\alpha_i}}.
	\end{align*} 
	Applying the Planchel formula, we obtain
	\begin{align*}
	\mathcal{E}^\sigma(f)-\mathcal{E}^\sigma(f^\star)&=\frac{\sigma^3}{2\sqrt{\pi}}\int_\mathcal{X}\int_{-\infty}^{+\infty}\exp\left(-\frac{\sigma^2\xi^2}{4}\right)\widehat{p_\mathsmaller{{\epsilon,x}}}(\xi)\left[1-\bold{e}^{\mathrm{i} \xi u_x }\right]\mathrm{d}\xi \mathrm{d}\rho_X(x)\\
	&=\frac{\sigma^3}{\sqrt{\pi}}\sum_{i=1}^K\lambda_i\int_\mathcal{X}\int_{-\infty}^{+\infty}\exp\left(-\frac{\sigma^2\xi^2}{4}\right)\widehat{p_\mathsmaller{{\epsilon,x,i}}}(\xi)\sin^2\left(\frac{\xi(f(x)-f^\star(x))}{2}\right)\mathrm{d}\xi \mathrm{d}\rho_X(x)\\
	&= \frac{\sigma^3}{\sqrt{\pi}}\int_\mathcal{X}\sum_{i=1}^K\lambda_i\int_{-\infty}^{+\infty}\exp\left(-\frac{\sigma^2\xi^2}{4}-\gamma_i|\xi|^{\alpha_i}\right)\sin^2\left(\frac{\xi(f(x)-f^\star(x))}{2}\right)\mathrm{d}\xi \mathrm{d}\rho_X(x),
	\end{align*}
	where the second equality is due to the fact that $\mathcal{E}^\sigma(f)-\mathcal{E}^\sigma(f^\star)$ is real for any $f\in\mathcal{H}$. For any $x\in\mathcal{X}$, $|u_x|=|f(x)-f^\star(x)|\leq 2M$. When $|\xi|\leq \frac{\pi}{2M}$, from Jordan's inequality, it holds that
	\begin{align*}
	\sin^2\left(\frac{\xi(f(x)-f^\star(x))}{2}\right)\geq \frac{2\xi^2(f(x)-f^\star(x))^2}{\pi^2}.
	\end{align*} 
	As a result, we come to the following conclusion
	\begin{align}\label{lower_bound}
	\begin{split}
	\mathcal{E}^\sigma(f)-\mathcal{E}^\sigma(f^\star)&\geq \frac{2\sigma^3}{\pi^{5/2}}\int_{\mathcal{X}}\sum_{i=1}^K\lambda_i\int_{-\frac{\pi}{2M}}^{\frac{\pi}{2M}}\xi^2\exp\left(-\frac{\sigma^2\xi^2}{4}-\gamma_i|\xi|^{\alpha_i}\right)(f(x)-f^\star(x))^2\mathrm{d}\xi \mathrm{d}\rho_X(x)\\
	&=c_\mathsmaller{\sigma,\gamma,\alpha}\int_{\mathcal{X}}(f(x)-f^\star(x))^2\mathrm{d}\rho_X(x),
	\end{split}
	\end{align}
	where  
	\begin{align}\label{constant_c_1}
	c_\mathsmaller{\sigma,\gamma,\alpha}=\frac{2\sigma^3}{\pi^{5/2}} \sum_{i=1}^K\lambda_i\int_{-\frac{\pi}{2M}}^{\frac{\pi}{2M}}\xi^2\exp\left(-\frac{\sigma^2\xi^2}{4}-\gamma_i|\xi|^{\alpha_i}\right)\mathrm{d}\xi.
	\end{align}
	The positiveness of $c_\mathsmaller{\sigma,\gamma,\alpha}$ implies that for any $f\in\mathcal{H}$, we have $\mathcal{E}^\sigma(f)\geq \mathcal{E}^\sigma(f^\star)$. That is,
	\begin{align*}
	f^\star=\arg\min_{f\in\mathcal{H}}\mathcal{E}^\sigma(f).
	\end{align*}
	To prove the second assertion, we combine inequalities \eqref{upper_bound} and \eqref{lower_bound}, and obtain
	\begin{align*}
	c_\mathsmaller{\sigma,\gamma,\alpha} \|f-f^\star\|_\rho^2 \leq \mathcal{E}^\sigma(f)-\mathcal{E}^\sigma(f^\star)\leq  \|f-f^\star\|_\rho^2,
	\end{align*}
	where $c_\mathsmaller{\sigma,\gamma,\alpha}$ is a positive constant given in \eqref{constant_c_1}. This completes the proof of Theorem \ref{equivalent_theorem}.
\end{proof}

Theorem \ref{equivalent_theorem} states that in the presence of mixture of symmetric stable noise, the population version of the MCCR estimator $f_\mathbf{z}$ is exactly the underlying unknown truth function $f^\star$ as long as $f^\star$ belongs to $\mathcal{H}$. Therefore, in this sense, $f_\mathbf{z}$ can be regarded as an unbiased estimator of $f^\star$. Another implication of Theorem \ref{equivalent_theorem} is that under the mixture of symmetric stable noise assumption, the excess risk of MCCR can be upper and lower bounded by the $L^2_{\rho_X}$-distance between the MCCR estimator $f_\mathbf{z}$ and the unknown truth $f^\star$. As we shall see later, this leads to fast convergence rates of the MCCR estimator $f_\mathbf{z}$ to $f^\star$.

\subsection{Performance of MCCR with Mixture of Symmetric Stable Noise}
We are now in a position to evaluate the learning performance of MCCR in the presence of mixture of symmetric stable noise by establishing convergence rates of $\|f_\mathbf{z}-f^\star\|_\rho^2$.

\begin{theorem}\label{performance_MCCR}
	Suppose that Assumption \ref{noise_assumption} and Complexity Assumption with $s>0$ hold. Let $f_\mathbf{z}$ be produced by \eqref{LearningAlgorithmwithCIL} and $f^\star\in\mathcal{H}$. For any $0<\delta<1$, with confidence $1-\delta$, it holds that
	\begin{align*}
	\|f_\mathbf{z}-f^\star\|_\rho^2\lesssim \log(1/\delta)n^{-\frac{2}{2+s}}.
	\end{align*} 
\end{theorem}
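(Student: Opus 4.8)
The plan is to convert the sample-dependent excess risk into the $L^2_{\rho_X}$ error through Theorem \ref{equivalent_theorem} and then control that excess risk by a single uniform deviation estimate over $\mathcal{H}$. First I would record the defining property of the empirical minimizer: since $f^\star\in\mathcal{H}$, the scheme \eqref{LearningAlgorithmwithCIL} gives $\mathcal{E}_\mathbf{z}^\sigma(f_\mathbf{z})\leq\mathcal{E}_\mathbf{z}^\sigma(f^\star)$. Writing
\[
\mathcal{E}^\sigma(f_\mathbf{z})-\mathcal{E}^\sigma(f^\star)=\big[\mathcal{E}^\sigma(f_\mathbf{z})-\mathcal{E}^\sigma(f^\star)\big]-\big[\mathcal{E}_\mathbf{z}^\sigma(f_\mathbf{z})-\mathcal{E}_\mathbf{z}^\sigma(f^\star)\big]+\big[\mathcal{E}_\mathbf{z}^\sigma(f_\mathbf{z})-\mathcal{E}_\mathbf{z}^\sigma(f^\star)\big]
\]
and discarding the last (nonpositive) bracket, the excess risk is bounded by the gap between the population and empirical versions of the centered loss evaluated at $f_\mathbf{z}$. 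By the lower bound in Theorem \ref{equivalent_theorem}, controlling $\mathcal{E}^\sigma(f_\mathbf{z})-\mathcal{E}^\sigma(f^\star)$ suffices to control $c_\mathsmaller{\sigma,\gamma,\alpha}\|f_\mathbf{z}-f^\star\|_\rho^2$, so the whole problem reduces to a uniform-over-$\mathcal{H}$ concentration statement.

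Next I would introduce, for each $f\in\mathcal{H}$, the excess-loss variable $\xi_f(x,y):=\ell_\sigma(y-f(x))-\ell_\sigma(y-f^\star(x))$, so that $\mathbb{E}\xi_f=\mathcal{E}^\sigma(f)-\mathcal{E}^\sigma(f^\star)$ and $\tfrac1n\sum_i\xi_f(x_i,y_i)=\mathcal{E}_\mathbf{z}^\sigma(f)-\mathcal{E}_\mathbf{z}^\sigma(f^\star)$. Two facts drive a fast rate: (i) $\ell_\sigma$ is globally Lipschitz with constant $L_\sigma=\sqrt{2/e}\,\sigma$ (since $\ell_\sigma'(t)=2te^{-t^2/\sigma^2}$), whence $|\xi_f|\leq L_\sigma|f-f^\star|\leq 2ML_\sigma=:B$ and $\mathbb{E}\xi_f^2\leq L_\sigma^2\|f-f^\star\|_\rho^2$; and (ii) the lower bound of Theorem \ref{equivalent_theorem} converts the latter into the Bernstein-type variance condition
\[
\mathbb{E}\xi_f^2\leq \frac{L_\sigma^2}{c_\mathsmaller{\sigma,\gamma,\alpha}}\,\mathbb{E}\xi_f .
\]
This variance-expectation relation is exactly what upgrades a crude $\mathcal{O}(n^{-1/2})$ deviation into the localized rate. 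I would also transfer the complexity bound: the Lipschitz property gives $\mathcal{N}_2(\{\xi_f:f\in\mathcal{H}\},\eta)\leq\mathcal{N}_2(\mathcal{H},\eta/L_\sigma)$, hence $\log\mathcal{N}_2(\{\xi_f\},\eta)\leq cL_\sigma^s\eta^{-s}$.

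With range, variance, and covering bounds in hand, I would invoke a one-sided ratio (Bernstein-type, localized by peeling over the value of $\mathbb{E}\xi_f$) concentration inequality of the kind standard in this literature (cf.\ \cite{wu2007multi,shi2011concentration}): with confidence $1-\delta$,
\[
\sup_{f\in\mathcal{H}}\frac{\mathbb{E}\xi_f-\frac1n\sum_{i=1}^n\xi_f(x_i,y_i)}{\sqrt{\mathbb{E}\xi_f+\epsilon_n}}\leq\sqrt{\epsilon_n},
\]
where the critical value $\epsilon_n$ solves the balance $n\epsilon_n\asymp(\epsilon_n)^{-s/2}+\log(1/\delta)$ obtained by matching the localized covering entropy $\log\mathcal{N}_2\asymp(\sqrt{\epsilon_n})^{-s}$ against the Bernstein exponent $n\epsilon_n$; this yields $\epsilon_n\asymp\max\{n^{-2/(2+s)},\log(1/\delta)/n\}\lesssim\log(1/\delta)\,n^{-2/(2+s)}$. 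Applying the inequality at $f=f_\mathbf{z}$ and using $\tfrac1n\sum_i\xi_{f_\mathbf{z}}(x_i,y_i)\leq0$ gives $\mathbb{E}\xi_{f_\mathbf{z}}\leq\sqrt{\epsilon_n}\sqrt{\mathbb{E}\xi_{f_\mathbf{z}}+\epsilon_n}$; squaring and solving the resulting quadratic in $\mathbb{E}\xi_{f_\mathbf{z}}$ produces $\mathbb{E}\xi_{f_\mathbf{z}}\lesssim\epsilon_n$. Feeding this back through the lower bound of Theorem \ref{equivalent_theorem}, $\|f_\mathbf{z}-f^\star\|_\rho^2\leq(c_\mathsmaller{\sigma,\gamma,\alpha})^{-1}\mathbb{E}\xi_{f_\mathbf{z}}\lesssim\log(1/\delta)\,n^{-2/(2+s)}$, which is the assertion.

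The main obstacle is the concentration step itself, not the algebra surrounding it. Establishing the localized ratio inequality with the correct critical exponent requires a careful peeling argument over the shells $\{f:\mathbb{E}\xi_f\approx 2^j\epsilon_n\}$, on each of which the $L^2$ radius is $\approx\sqrt{2^j\epsilon_n}$, so that the relevant covering scale — and hence the entropy $(\sqrt{\epsilon_n})^{-s}$ — is the one that must be balanced against the Bernstein variance term; getting these two contributions to match is precisely what produces the exponent $2/(2+s)$. A secondary point to handle cleanly is that $\ell_\sigma$ is \emph{non-convex}, so $\mathcal{H}$ need not be star-shaped around $f^\star$ and the peeling must be carried out directly on the loss class $\{\xi_f\}$ through its own empirical covering numbers rather than through any convexity of the risk; the Lipschitz transfer of the covering number above is what legitimizes this. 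All constants produced along the way depend on $M$, $\sigma$, and $c_\mathsmaller{\sigma,\gamma,\alpha}$ but not on $n$ or $\delta$, and are therefore absorbed into $\lesssim$.
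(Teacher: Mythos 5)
Your proposal is correct and follows essentially the same route as the paper: both reduce the problem to a uniform localized concentration bound over the excess-loss class $\{\ell_\sigma(y-f(x))-\ell_\sigma(y-f^\star(x)):f\in\mathcal{H}\}$, verified via the boundedness and Lipschitz continuity of $\ell_\sigma$, the Bernstein condition $\mathbb{E}\xi_f^2\lesssim\mathbb{E}\xi_f$ obtained from the lower bound in Theorem \ref{equivalent_theorem}, and the covering-number transfer, before feeding the resulting excess-risk bound back through that same lower bound. The only cosmetic difference is that you sketch the localized ratio inequality by peeling, whereas the paper invokes it directly as Lemma \ref{empiricalconcen} from \cite{wu2007multi} with $\theta=1$.
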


When functions in $\mathcal{H}$ are sufficiently smooth, the index $s$ could be arbitrarily small. Therefore, it is immediate to see that the convergence rates established in Theorem \ref{performance_MCCR} are asymptotically of type $\mathcal{O}(n^{-1})$. Recall that in Theorem \ref{performance_MCCR}, the noise $\varepsilon$ is only assumed to be a mixture of symmetric stable noise which include the mixture Gaussian and the Cauchy noise, and can be applied to model outliers. It is interesting to see that in this case the MCCR estimator $f_\mathbf{z}$ can learn the conditional mean function or the conditional median function $f^\star$ well. This, in fact, explains the merits of MCCR in dealing with heavy-tailed noise or outliers. Moreover, as far as we are aware, within the statistical learning framework, we present some first results on the optimal convergence rates of regression estimator without imposing finite-variance or even finite first-order moment conditions on the noise. 

To prove Theorem \ref{performance_MCCR}, we need the following lemma established in \cite{wu2007multi}. 
\begin{lemma}\label{empiricalconcen}
	Let $\mathcal{F}$ be a class of measurable functions on
	$\mathcal{Z}$. Assume that there are constants $B,c>0$ and
	$\theta\in[0,1]$ such that $\|f\|_{\infty}\leq B$ and
	$\mathbb{E}f^2\leq c(\mathbb{E}f)^{\theta}$ for every $f\in
	\mathcal{F}$. If for some $a>0$ and $s\in (0,2)$,
	\begin{equation*} 
	\log\mathcal{N}_2\left(\mathcal{F},\eta\right)\leq a\eta^{-s},
	\qquad \forall\,\eta >0,
	\end{equation*}
	then there exists a constant $\alpha_p$ depending only on $p$ such
	that for any $t>0$, with probability at least $1-e^{-t}$, there
	holds
	\begin{equation*}
	\mathbb{E}f-\frac{1}{m}\sum_{i=1}^mf(z_i)\leq
	\frac{1}{2}\gamma^{1-\theta}\left(\mathbb{E}f\right)^{\theta}+\alpha_p\gamma
	+2\left(\frac{ct}{m}\right)^{\frac{1}{2-\theta}}+\frac{18Bt}{m},
	\qquad \forall\, f\in \mathcal{F},
	\end{equation*}
	where
	$$\gamma:=\max\left\{c^{\frac{2-s}{4-2\theta+s\theta}}\left(\frac{a}{m}\right)
	^{\frac{2}{4-2\theta+s\theta}},B^{\frac{2-s}{2+s}}\left(\frac{a}{m}\right)
	^{\frac{2}{2+s}}\right\}.$$
\end{lemma}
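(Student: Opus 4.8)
The plan is to establish a one-sided, self-normalized uniform deviation inequality for the empirical process indexed by $\mathcal{F}$, assembled from three ingredients: a pointwise Bernstein inequality, an entropy (chaining) estimate that uses the covering assumption, and a peeling argument over the value of $\mathbb{E}f$ that exploits the variance--mean relation $\mathbb{E}f^2\le c(\mathbb{E}f)^\theta$. First I would fix a single $f$ with $\|f\|_\infty\le B$ and $\mathbb{E}f^2\le c(\mathbb{E}f)^\theta$ and apply the one-sided Bernstein inequality to $\mathbb{E}f-\frac{1}{m}\sum_{i=1}^m f(z_i)$; at confidence $e^{-t}$ this produces a variance contribution of order $\sqrt{c(\mathbb{E}f)^\theta t/m}$ and a boundedness contribution of order $Bt/m$. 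Applying Young's inequality to the variance contribution separates off the factor $(\mathbb{E}f)^\theta$ and leaves an additive confidence term of the exact order $(ct/m)^{1/(2-\theta)}$, which accounts for the summand $2(ct/m)^{1/(2-\theta)}$, while the boundedness contribution yields the summand $18Bt/m$.

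\textbf{Uniformization and the origin of $\gamma$.} To make the bound uniform over $\mathcal{F}$ I would pass from a single $f$ to the whole class by symmetrization (a ghost sample together with Rademacher signs), which legitimizes the use of the random $\ell^2$-empirical cover, and then run a Dudley chaining estimate. Because $s<2$, the entropy integral $\int_0^r\sqrt{a}\,\eta^{-s/2}\,\mathrm{d}\eta$ converges and bounds the expected supremum of the empirical process over an $L^2$-ball of radius $r$ by a quantity of order $\sqrt{a/m}\,r^{1-s/2}$. I would then peel $\mathcal{F}$ into dyadic shells $\{f:2^{j-1}\gamma<\mathbb{E}f\le 2^{j}\gamma\}$ together with the core shell $\{\mathbb{E}f\le\gamma\}$. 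On a shell the $L^2$-radius is controlled in two ways: through the assumption, giving $r^2\asymp c(\mathbb{E}f)^\theta$, and through the crude bound $\mathbb{E}f^2\le B\,\mathbb{E}|f|$ coming from $\|f\|_\infty\le B$. Inserting each radius into $\sqrt{a/m}\,r^{1-s/2}$ and balancing the resulting modulus against $\gamma$ on the core shell yields a sub-root fixed-point equation whose solutions are $c^{(2-s)/(4-2\theta+s\theta)}(a/m)^{2/(4-2\theta+s\theta)}$ in the first case and $B^{(2-s)/(2+s)}(a/m)^{2/(2+s)}$ in the second. Taking the larger of the two --- which is what the true modulus requires --- reproduces exactly the $\gamma$ of the statement, and the constant $\alpha_p$ absorbs the numerical constants from chaining and symmetrization.

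\textbf{Assembling the shells.} With the shell-wise moduli in hand, I would apply Bernstein on the $j$-th shell at level $t_j=t+j\log 2$ and take a union bound over $j$, so that the total failure probability is at most $e^{-t}\sum_j 2^{-j}\lesssim e^{-t}$, delivering the claimed confidence $1-e^{-t}$. On any shell with $\mathbb{E}f\ge\gamma$ the chaining modulus is at most $\tfrac12\gamma^{1-\theta}(\mathbb{E}f)^\theta$ --- this is precisely where $\gamma$ enters with exponent $1-\theta$, and the inequality $\gamma\le\mathbb{E}f$ on these shells is what makes the bound consistent --- whereas on the core shell $\{\mathbb{E}f\le\gamma\}$ the modulus is absorbed into $\alpha_p\gamma$. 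Adding back the two Bernstein confidence terms from the first step then gives the stated inequality simultaneously for every $f\in\mathcal{F}$.

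\textbf{Main obstacle.} I expect the delicate point to be the uniformization step: obtaining the \emph{sharp} shell-local control of the empirical process with the correct power $r^{1-s/2}$ of the $L^2$-radius, and in particular justifying the use of the data-dependent $\ell^2$-empirical covering number $\mathcal{N}_2(\mathcal{F},\eta)$ rather than a deterministic one. This forces a careful symmetrization argument together with control of the randomness of the cover, and the chaining must be arranged so that the exponent of $r$ meshes exactly with the exponent $\theta$ in the variance--mean relation; it is this matching that pins down the precise exponents $2/(4-2\theta+s\theta)$ and $2/(2+s)$ in the definition of $\gamma$. By comparison, the remaining algebra --- Young's inequality and summing the geometric series over shells --- is routine.
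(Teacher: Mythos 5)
The paper itself gives no proof of this lemma---it is quoted verbatim from the cited reference \cite{wu2007multi} (which also explains the stray notation $\alpha_p$: in the source the covering exponent here called $s$ is denoted $p$). Your sketch reconstructs essentially the standard argument behind that result---one-sided Bernstein for a fixed $f$, symmetrization to justify the random $\ell^2$-empirical cover followed by chaining under the $\eta^{-s}$ entropy bound (convergent since $s<2$), and dyadic peeling balanced against the variance--mean relation---and your fixed-point bookkeeping correctly reproduces both exponents $2/(4-2\theta+s\theta)$ and $2/(2+s)$ in the definition of $\gamma$, so the approach matches the source proof.
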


\begin{proof}[Proof of Theorem \ref{performance_MCCR}]
To prove Theorem \ref{performance_MCCR}, we apply Lemma \ref{empiricalconcen} to the function set $\mathcal{F}_\mathcal{H}$ defined below
\begin{align*}
\mathcal{F}_\mathcal{H}=\left\{g \,\,\Big |\,\, g(z)=-\sigma^2\exp\left\{-(y-f(x))^2/\sigma^2\right\}+
\sigma^2\exp\left\{-(y-f^\star(x))^2/\sigma^2\right\},
f\in\mathcal{H},z\in\mathcal{Z}\right\}.
\end{align*}
	We first verify conditions in Lemma \ref{empiricalconcen}. From the definition of $\mathcal{F}_\mathcal{H}$, for any $g\in\mathcal{F}_\mathcal{H}$, we have
	\begin{align*}
	\|g\|_\infty\leq \sigma^2+\sigma^2=2\sigma^2,
	\end{align*}
	and the following Bernstein condition holds
	\begin{align}\label{Bernstein_condition}
	\begin{split}
	\mathbb{E}g^2&=\int_\mathcal{Z}\left(-\sigma^2\exp\left\{-\frac{(y-f(x))^2}{\sigma^2}\right\}+
	\sigma^2\exp\left\{-\frac{(y-f^\star(x))^2}{\sigma^2}\right\}\right)^2\mathrm{d}\rho\\
	& \lesssim \sigma^2\int_\mathcal{Z} \left((y-f(x))-(y-f^\star(x))\right)^2\mathrm{d}\rho\\
	&=\sigma^2\int_\mathcal{X}(f(x)-f^\star(x))^2\mathrm{d}\rho \lesssim \mathbb{E}g,
	\end{split}
	\end{align}
	where the first inequality is a consequence of the mean value theorem and the boundedness of $\|h^\prime\|$ with $h(t)=-\sigma^2\exp(-t^2/\sigma^2)$, $t\in\mathbb{R}$, and the second inequality is due to Theorem \ref{equivalent_theorem}. On the other hand, for any $g_1$, $g_2\in \mathcal{F}_\mathcal{H}$, there exist $f_1,\,f_2\in\mathcal{H}$ such that
	\begin{align*}
	g_1(z)=-\sigma^2\exp\left\{-(y-f_1(x))^2/\sigma^2\right\}+
	\sigma^2\exp\left\{-(y-f^\star(x))^2/\sigma^2\right\},
	\end{align*} 
	and  
	\begin{align*}
	g_2(z)=-\sigma^2\exp\left\{-(y-f_2(x))^2/\sigma^2\right\}+
	\sigma^2\exp\left\{-(y-f^\star(x))^2/\sigma^2\right\}.
	\end{align*}
	By applying the mean value theorem and noticing again the boundedness of $\|h^{\prime}\|_\infty$, we have
	\begin{align*}
	\|g_1-g_2\|_\infty\leq \sigma^2\|f_1-f_2\|_\infty.
	\end{align*}
	Under the Complexity Assumption with $0<s<2$, the following relation between the $\ell^2$-empirical covering numbers of $\mathcal{F}_\mathcal{H}$ and $\mathcal{H}$ holds
	\begin{align*}
	\log\mathcal{N}_2(\mathcal{F}_\mathcal{H},\eta)\leq
	\log\mathcal{N}_2\Big(\mathcal{H},\eta/\sigma^2\Big)\lesssim \eta^{-s}.
	\end{align*}
	Applying Lemma \ref{empiricalconcen} to the function set
	$\mathcal{F}_\mathcal{H}$, with simple computations, we come to the conclusion that for any $0<\delta<1$ with confidence $1-\delta$, there holds
	\begin{align*}
	\left[\mathcal{E}^\sigma(f)-\mathcal{E}^\sigma(f^\star)\right]-
	\left[\mathcal{E}^\sigma_\mathbf{z}(f)-\mathcal{E}^\sigma_\mathbf{z}(f^\star)\right]-\frac{1}{2}\left[\mathcal{E}^\sigma(f)-\mathcal{E}^\sigma(f^\star)\right]\lesssim \log(1/\delta)n^{-\frac{2}{2+s}}.
	\end{align*}
	Noticing that $\mathcal{E}_\mathbf{z}^\sigma(f_\mathbf{z})\leq \mathcal{E}_\mathbf{z}^\sigma(f^\star)$, we have
	\begin{align*}
	\frac{1}{2}\left[\mathcal{E}^\sigma(f_\mathbf{z})-\mathcal{E}^\sigma(f^\star)\right]\leq \left[\mathcal{E}^\sigma(f_\mathbf{z})-\mathcal{E}^\sigma(f^\star)\right]-
	\left[\mathcal{E}^\sigma_\mathbf{z}(f_\mathbf{z})-\mathcal{E}^\sigma_\mathbf{z}(f^\star)\right]-\frac{1}{2}\left[\mathcal{E}^\sigma(f_\mathbf{z})-\mathcal{E}^\sigma(f^\star)\right].
	\end{align*}
	Therefore, for any $0<\delta<1$ with confidence $1-\delta$, it holds that
	\begin{align*}
	\|f_\mathbf{z}-f^\star\|_\rho^2\lesssim \log(1/\delta)n^{-\frac{2}{2+s}}.
	\end{align*}
	This completes the proof of Theorem \ref{performance_MCCR}.
\end{proof}

\begin{remark}
	From the proof of Theorem \ref{performance_MCCR}, we see that the boundedness of the loss function $\ell_\sigma$ and the Bernstein condition \eqref{Bernstein_condition} play a crucial role in establishing fast convergence rates of $f_\mathbf{z}$. The Bernstein condition holds because of the Lipschitz continuity of the loss function $\ell_\sigma$ on $\mathbb{R}$ and the fact that the $L_{\rho_X}^2$-distance between $f_\mathbf{z}$ and $f^\star$ can be upper bounded by the excess risk $\mathcal{E}^\sigma(f_\mathbf{z})-\mathcal{E}^\sigma(f^\star)$, i.e., conclusions in Theorem \ref{equivalent_theorem}.     
\end{remark}

\subsection{Comments on MCCR with Mixture of Symmetric Stable Noise}
We now give two remarks on the performance of the MCCR estimator $f_\mathbf{z}$ in the presence of mixture of symmetric stable noise by comparing with that of the least squares estimator. 

The first remark is on the convergence rates of the two regression estimators. As shown in Theorem \ref{performance_MCCR}, in the presence of mixture of symmetric stable noise and when $f^\star\in\mathcal{H}$, $f_\mathbf{z}$ can learn the unknown truth function $f^\star$ well. The established learning rates are of   type $\mathcal{O}(n^{-\frac{2}{2+s}})$ which are optimal in the sense that they are asymptotically of type $\mathcal{O}(n^{-1})$. Moreover, they are comparable with that of least squares estimators \cite{wu2006learning,cucker2007learning}.

Our second remark is on the conditions required to established convergence rates for the two regression schemes. Recalling that for least squares regression, to establish learning theory type convergence rates, the response variable (and consequently the noise, under the data-generating model \eqref{data_generating}) is frequently assumed to be uniformly bounded \cite{cucker2007learning,steinwart2008support}, which is usually not the case in practice. In fact, even in the presence of Gaussian noise, to establish learning theory type convergence rates for least squares regression, it is much involved due to the unboundedness of the response variable, in which case many conventional learning theory arguments and tools are not applicable. Recently, some efforts have been made to relax this assumption \cite{wang2011optimal,guo2013concentration,mendelson2015learning}. As far as we are aware, convergence rates for least squares regression estimators cannot be established without resorting to the finite-variance condition. When moving our attention to correntropy based regression, as shown above, in the presence of mixture of symmetric stable noise, optimal learning rates of MCCR estimator are established. Notice that symmetric stable noise with the characteristic exponent parameter $0<\alpha<2$ has infinite variance or even first-order moment. Moreover, as stated above, it can approximate any density function arbitrarily well with properly chosen $K$ and consequently can be applied to model outliers. In this sense, our study presented here explains the capability of MCCR estimators in dealing with outliers.

\section{Simulations}
In this section, we provide simulations (1) to validate the feasibility of modeling outliers by using mixture of symmetric stable distributions and (2) to justify the robustness of MCCR to outliers by comparing with that of Huber regression estimators which are regarded as outlier robust. 

Concerning the data generating model $Y=f^\star(X)+\varepsilon$, we set the truth function $f^\star$ as the following sinc function  
\begin{align*}
f^\star(x)=\sin(\pi x)/(\pi x), x\in [-4,4],
\end{align*}  
as done in \cite{vapnik1998statistical,smola2004tutorial}. In our simulation studies, we aim to learn $f^\star$ from observations that are contaminated by outliers. In particular, the outliers are generated by mixture of symmetric stable noise as proposed in this study. We consider the following two types of noise that belong to this category:
\begin{itemize}
	\item Noise I: $\varepsilon \sim 0.9N(0,0.05^2)+0.1N(0,0.5^2)$
	\item Noise II: $\varepsilon \sim 0.9N(0,0.05^2)+0.1\hbox{Cauchy}(0,1)$
\end{itemize} 
For Noise I, it is drawn from the mixture of two Gaussian distributions where the background noise is drawn from $N(0,0.05^2)$ and the contaminating noise is drawn from $N(0,0.5^2)$ to generate outliers. For Noise II, it is drawn from the mixture of Gaussian and Cauchy distributions where the Gaussian noise $N(0,0.05^2)$ serves as background noise and outliers are generated by the contaminating noise $\hbox{Cauchy}(0,1)$, i.e., Cauchy noise with the location parameter $0$ and the scale parameter $1$.   
  
\begin{figure}[!htb]
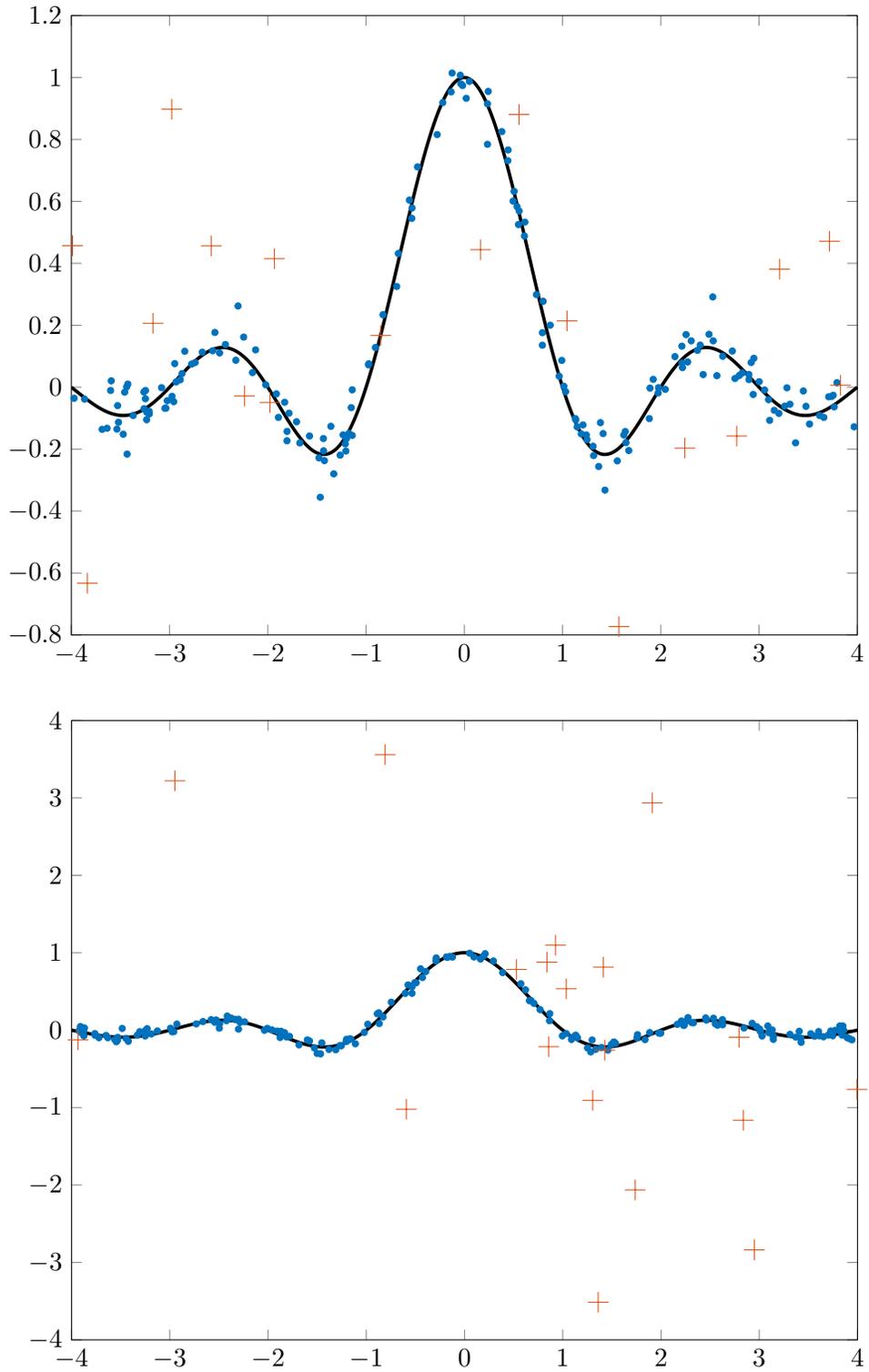

	\centering
	\begin{minipage}[b]{1\textwidth}
		\centering
		\input{outlier_Gaussian.tikz}
	\end{minipage}
	\\
	\bigskip
	\begin{minipage}[b]{1\textwidth}
		\centering
		\input{outlier_Cauchy.tikz}
	\end{minipage}%
	\caption{Sinc function (black solid curves) and training samples. The samples with red crosses are regarded as outliers. (top) The observations are contaminated by mixture of Gaussian noise. (bottom) The observations are contaminated by mixture of Gaussian and Cauchy noise.}
	\label{outliers_generating}
\end{figure} 

\begin{figure}[!htb]
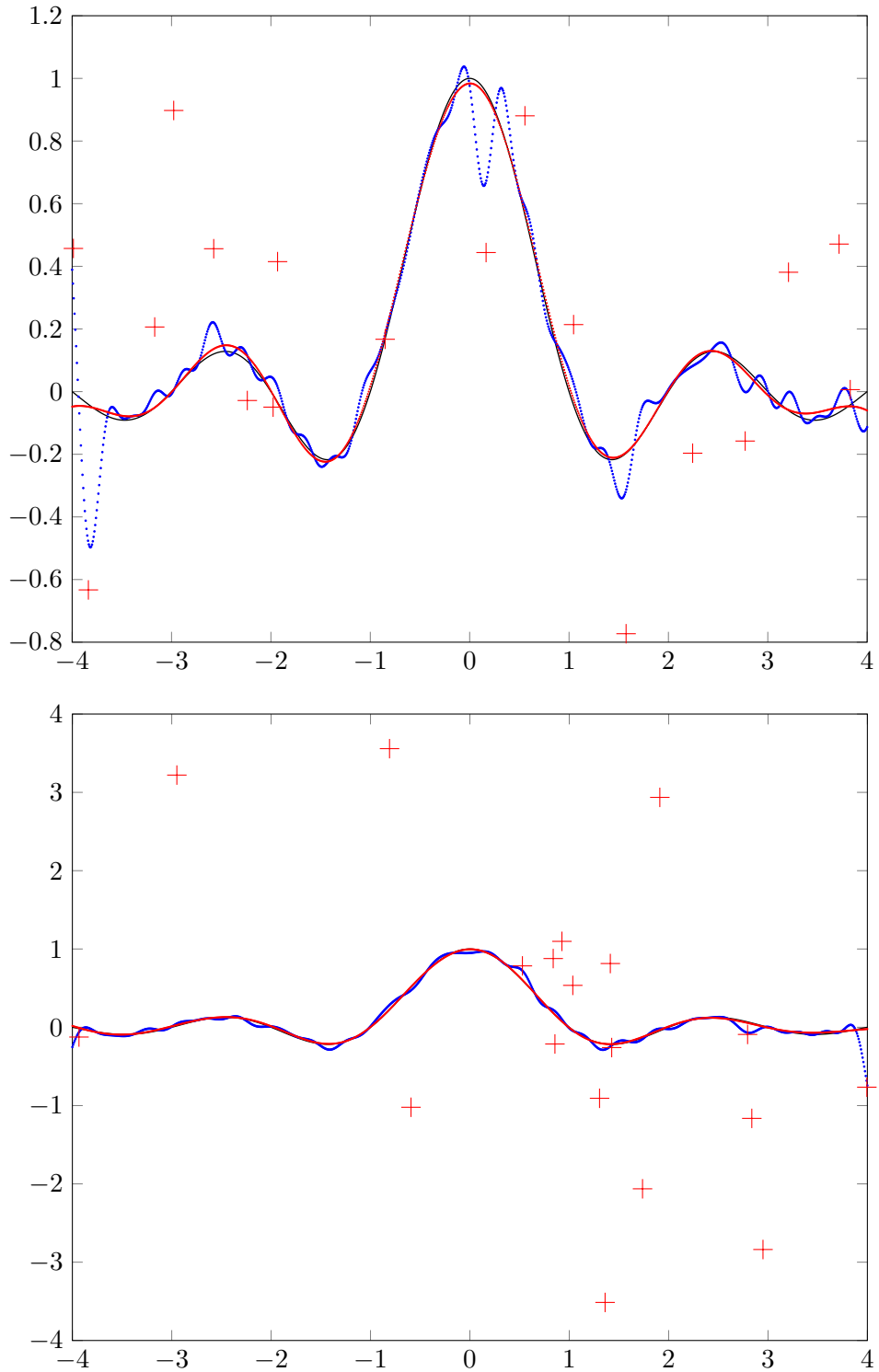

	\centering
	\begin{minipage}[b]{1\textwidth}
		\centering
		\input{fitting_Gaussian.tikz}
	\end{minipage}
	\\
	\medskip 
	\begin{minipage}[b]{1\textwidth}
		\centering
		\input{fitting_Cauchy.tikz}
	\end{minipage}%
	\caption{Outliers (red crosses), sinc function (black solid curves) and its estimators from MCCR and Huber regression (MCCR: red  dashed curve; Huber: blue dashed curve). (top) The observations are contaminated by mixture of Gaussian noise. (bottom) The observations are contaminated by mixture of Gaussian and Cauchy noise.}
	\label{fitting_result}
\end{figure}

We set up our experiment by following that of \cite{fenglearning}, i.e., the hypothesis space $\mathcal{H}$ is chosen as a subset of a reproducing kernel Hilbert space which is selected automatically by means of a regularized empirical risk minimization, see formula (21) in \cite{fenglearning}. A Gaussian kernel is utilized as the reproducing kernel. $200$ samples are drawn as training data and $400$ samples are drawn as test data. The bandwidth parameter, the regularization parameter, and the scale parameter in Huber's loss are tuned via a five-fold cross validation. The scale parameter $\sigma$ in the loss function $\ell_\sigma$ \eqref{loss_function} is set to $0.01$. 

Experimental results on the generation of outliers and the learned curves are plotted in Figs.\, \ref{outliers_generating} and \ref{fitting_result}. In Fig.\,\ref{outliers_generating}, the black curves stand for the curve of the truth function $f^\star$. The blue dots from the two panels stand for samples that are contaminated by the background noise of Noise I and Noise II, respectively. The red crosses are samples contaminated by contaminating noise of the two noise types, respectively, which are regarded as outliers. In Fig.\,\ref{fitting_result}, the truth curve (black solid line) as well as the curves learned from MCCR (dashed red curve) and from Huber regression (dashed blue curve) are plotted when the noise are of type I and type II, respectively. Outliers are also marked in Fig.\,\ref{fitting_result} for illustration.

From Fig. \ref{outliers_generating}, it is easy to see that outliers are indeed generated when the noise are drawn from mixture of symmetric stable distributions. According to Fig. \ref{fitting_result}, MCCR is robust to outliers and performs better than Huber regression in the presence of outliers.

\section{Conclusion}\label{sec::conclusion}
In this paper, we studied the correntropy based regression within the statistical learning framework by introducing the mixture of symmetric stable noise which  subsume Gaussian noise, Cauchy noise, and mixture of Gaussian noise. In this study, it was introduced to model heavy-tailed noise and outliers, to which the correntropy based regression estimators have been empirically verified to be resistant. In our study, we showed that the empirical risk minimization scheme based on the correntropy induced loss can learn the underlying truth function sufficiently well while allowing the noise to be the mixture of symmetric stable noise. In particular, learning theory analysis was conducted and the learning performance of MCCR with mixture of symmetric stable noise was evaluated. It is interesting to see that, in this case, asymptotically optimal learning rates of type $\mathcal{O}(n^{-1})$ can be developed, which are comparable with that of least squares regression under bounded noise assumption. These theoretical findings successfully explain the efficiency and effectiveness of correntropy based regression estimators in the presence of heavy-tailed noise or outliers.   

\section*{Acknowledgement}
The authors would like to than the referees for their constructive suggestions and comments. The work of Yiming Ying is supported by National Science Foundation (NSF) under Grant No. 1816227.

\bibliographystyle{plain}  
\bibliography{FENGBib} 
\end{document}